\newtheorem{theorem}{Theorem}
\newtheorem{proof}{Proof}
\newtheorem{lemma}{Lemma}
\newtheorem{definition}{Definition}
\title{Enhancing User-Centric Privacy Protection: An Interactive Framework through Diffusion Models and Machine Unlearning}
\author{
    Huaxi Huang\textsuperscript{\rm 1}\thanks{Project lead. Huaxi Huang, Xin Yuan and Qiyu Liao are Co-first authors. Corresponding Author: Dadong Wang. This work was done when Huaxi was a CREC Research Fellow at Data61.}, Xin Yuan\textsuperscript{\rm 1}, Qiyu Liao\textsuperscript{\rm 1}, Dadong Wang\textsuperscript{\rm 1}, Tongliang Liu\textsuperscript{\rm 2}
    \\
}
\begin{document}
\maketitle

\begin{abstract}
In the realm of multimedia data analysis, the extensive use of image datasets has escalated concerns over privacy protection within such data. Current research predominantly focuses on privacy protection either in data sharing or upon the release of trained machine learning models. Our study pioneers a comprehensive privacy protection framework that safeguards image data privacy concurrently during data sharing and model publication. We propose an interactive image privacy protection framework that utilizes generative machine learning models to modify image information at the attribute level and employs machine unlearning algorithms for the privacy preservation of model parameters. This user-interactive framework allows for adjustments in privacy protection intensity based on user feedback on generated images, striking a balance between maximal privacy safeguarding and maintaining model performance. Within this framework, we instantiate two modules: a differential privacy diffusion model for protecting attribute information in images and a feature unlearning algorithm for efficient updates of the trained model on the revised image dataset. Our approach demonstrated superiority over existing methods on facial datasets across various attribute classifications.


\end{abstract}

\section{Introduction}
Images inherently contain a wealth of private information, such as gender, race, and age in facial datasets to license plate numbers and vehicle types in car photographs. The extensive use of such images across social networks, government databases, and industrial applications has significantly raised privacy risks, leading to profound public concerns \cite{times2018facebook,liu2021machine}. This growing reliance on image data brings privacy issues to the forefront of global discussions and legislation. Countries around the world have responded by implementing stringent privacy laws, with the European General Data Protection Regulation (GDPR) \cite{GDPR2016a} and the Australian Privacy Act 1988 \cite{no1988privacy} being prime examples. These laws emphasize the safeguarding of ``personal data,'' which includes any information linked to a specific or identifiable individual. Given this definition, images fall under the category of personal data due to their frequent inclusion of personal sensitive attributes such as faces, textual content, and license plates. This situation highlights the urgent need for effective methods to protect image data privacy.

Contemporary multimedia research has focused on dataset publication and model design, with a key challenge being privacy protection. Traditional methods like pixel-level image obfuscation~\cite{fan2018image,fan2019practical} are ineffective against advanced deep learning techniques, while recent approaches using generative adversarial networks (GANs)~\cite{liu2021dp,wen2022identitydp,liposter} suffer from instability and poor image quality. Moreover, despite progress in preventing unauthorized access to sensitive model information, a unified solution addressing privacy in both dataset sharing and model deployment is lacking.
This challenge is particularly relevant for companies providing facial image applications, such as facial recognition and emotion detection services, which require comprehensive privacy solutions to manage user requests for the removal of private data, requiring adjustments to both datasets and models.

Recognizing these theoretical and practical challenges, this paper 
proposes an integrated framework that bridges the gap by effectively managing privacy concerns in image data sharing and secure machine learning model release. 
Developed from published datasets, these models are designed to prioritize privacy protection and can efficiently unlearn specific private information upon user requests. This approach ensures a balance between privacy safeguarding and the utility and performance of the data and models. Fig.~\ref{fig2} provides a schematic overview of this framework, illustrating the interactive, user-centric process from customer input to model refinement, underscoring our strategy for dynamic privacy management in both data and models.

Specifically, our framework addresses the privacy preservation challenge in image sharing by introducing differential privacy (DP). To overcome the instability in GAN training and issues with image quality, we have drawn inspiration from diffusion models \cite{ho2020denoising,preechakul2022diffusion} and developed a novel approach to differential image privacy. This method employs diffusion models for both the extraction of intermediate features and the generation of new images. The fundamental advantage of diffusion models lies in their process, which primarily involves adding noise to images and optimizing the subsequent denoising steps, circumventing the adversarial training intrinsic to GAN models and leading to more stable model training. Existing research \cite{ho2020denoising,preechakul2022diffusion} indicates that diffusion models can achieve superior image quality compared to GANs. 
We design the Diffusion Differential Privacy model (Diffusion-DP). By utilizing a diffusion autoencoder model~\cite{preechakul2022diffusion}, we train an auxiliary classifier to disentangle intermediate features, allowing precise control over target face images by injecting DP noises. This enhances the privacy protection of the shared images while maintaining their utility.

To address the challenge of updating models in response to dataset changes, especially in erasing privacy information, our framework incorporates an advanced machine unlearning (AMU) module. This module is designed for scenarios requiring the removal of specific private information from trained models due to dataset updates. Unlike traditional methods of retraining models from scratch, characterized by inefficiency and high costs, our method utilizes cutting-edge machine unlearning techniques \cite{bourtoule2021machine, WarneckePWR23}. We have adapted the machine feature unlearning algorithm \cite{WarneckePWR23} specifically for our task, allowing for rapid fine-tuning of models with minimal data input. 
This enables the models to efficiently unlearn specific information, significantly reducing the resources and time required while ensuring high performance. Integrated with the Diffusion-DP module and the AMU module, our framework offers an efficient solution for adapting models to dataset changes, rigorously safeguarding user privacy, and maintaining the utility and effectiveness of the models.

In summary, this paper makes the following contributions:
\begin{itemize}
    \item We propose a user-centric interactive
image privacy protection framework designed to safeguard user privacy concurrently during image data sharing and model release phases. This framework is interactive, enabling user-driven adjustments for personalized privacy considerations.
    \item We instantiate two specific modules within our broader framework to separately address the processing of private information in data sharing and model updates. The Diffusion-DP module focuses on editing and generating image data with an emphasis on privacy preservation, while the AMU module enables efficient and rapid model adjustments in response to dataset updates, ensuring privacy compliance.
    \item Comprehensive experiments are conducted to evaluate the effectiveness of our proposed framework. The results demonstrate that our approach not only achieves significant improvements in privacy protection but also maintains high utility and performance.
\end{itemize}

The rest of this paper is organized as follows. Section \ref{RL} introduces related works. Section \ref{Method} presents the proposed framework. In Section \ref{exp}, we evaluate the proposed method on widely-used facial datasets. The conclusion is discussed in Section \ref{con}.

\section{Related Work} \label{RL}
\subsection{Privacy Protection Methods for Image Data}

Traditional privacy-preserving methods like pixel-level image obfuscation are often ineffective against advanced deep learning techniques~\cite{fan2018image,fan2019practical}, as they can significantly degrade the utility of the image and fail to protect against sophisticated attacks~\cite{mcpherson2016defeating}. Recent research has proposed using deep GANs for image privacy protection, where DP noise is infused into intermediate features extracted from the encoder, which are then used by the GAN's image generator to produce new images \cite{liu2021dp,wen2022identitydp}. While this approach offers a more robust solution, it faces challenges such as GAN training instability, susceptibility to mode collapse, and the need for further improvement in the quality of generated images~\cite{lucic2018gans}. 

This challenge extends to contemporary multimedia research, which focuses on both dataset publication~\cite{liu2015faceattributes,CelebAMask-HQ,deng2009imagenet,karras2019style} and specific model design~\cite{He_2016_CVPR,ren2015faster,ronneberger2015u}. Privacy protection remains a critical concern, particularly in the publication and sharing of datasets. Although substantial progress has been made in developing privacy-preserving techniques for both datasets and models, a unified approach that addresses privacy concerns in both domains is still lacking. This gap is particularly evident in practical scenarios faced by companies specializing in facial image applications~\cite{Faceapp,API}, where there is a growing demand for solutions that safeguard privacy while maintaining the functionality of AI services based on facial data, like facial recognition~\cite{deng2019arcface,schroff2015facenet}, age estimation~\cite{pan2018mean,gao2018age,li2022unimodal}, gender identification~\cite{eidinger2014age,mivolo2023}, and emotion detection~\cite{KolliasZ19,zhang2023abaw5,savchenko2023facial}. 
When users request the retraction of their private data to prevent privacy violations, these companies must carefully adjust both datasets and corresponding AI models, underscoring the complexity and importance of a comprehensive privacy-preserving strategy.

\subsection{Machine Unlearning}
In the field of machine learning, both exact and approximate machine unlearning methods have been explored to address the challenge of completely removing the influence of certain data segments from trained models. Exact machine unlearning aims to fully eliminate data influence, often necessitating some degree of model retraining. Early work by Cao \textit{et al.}~\cite{cao2015towards} introduced a heuristic approach to convert machine learning algorithms into a summation format, facilitating the removal of data lineage. Subsequent methods, such as Sharded, Isolated, Sliced, and Aggregated training (SISA) by Bourtoule \textit{et al.}~\cite{bourtoule2021machine}, proposed a sharded, isolated, sliced, and aggregated training approach to enhance retraining efficiency. DeltaGrad~\cite{wu2020deltagrad}, another exact method, accelerates retraining by counteracting the data designated for deletion, though it is limited to specific algorithms and cannot manage mini-batch sizes.

On the other hand, approximate machine unlearning focuses on minimizing the differences between models before and after data removal, aiming to preserve model performance. Notable methods include the certified-removal approach by Guo \textit{et al.}~\cite{guo2020certified}, which uses a Newton step to erase the influence of data points in $L_2$-regularized linear models, and a scrubbing method for deep neural networks proposed by Golatkar \textit{et al.}~\cite{golatkar2020eternal}. These approaches often incorporate differential privacy mechanisms to obscure residual information. Additionally, specialized methods, such as those by Ginart \textit{et al.}~\cite{ginart2019making} for $K$-means, highlight the development of model-specific unlearning techniques. While approximate methods offer privacy safeguards and performance preservation, they face challenges in verifying implementation and aligning with legal requirements like the ``right to be forgotten.''
\section{User-centric Interactive
Image Privacy Protection Framework}\label{Method}

\begin{figure*}[t]
\centering
\includegraphics[width=0.8\linewidth]{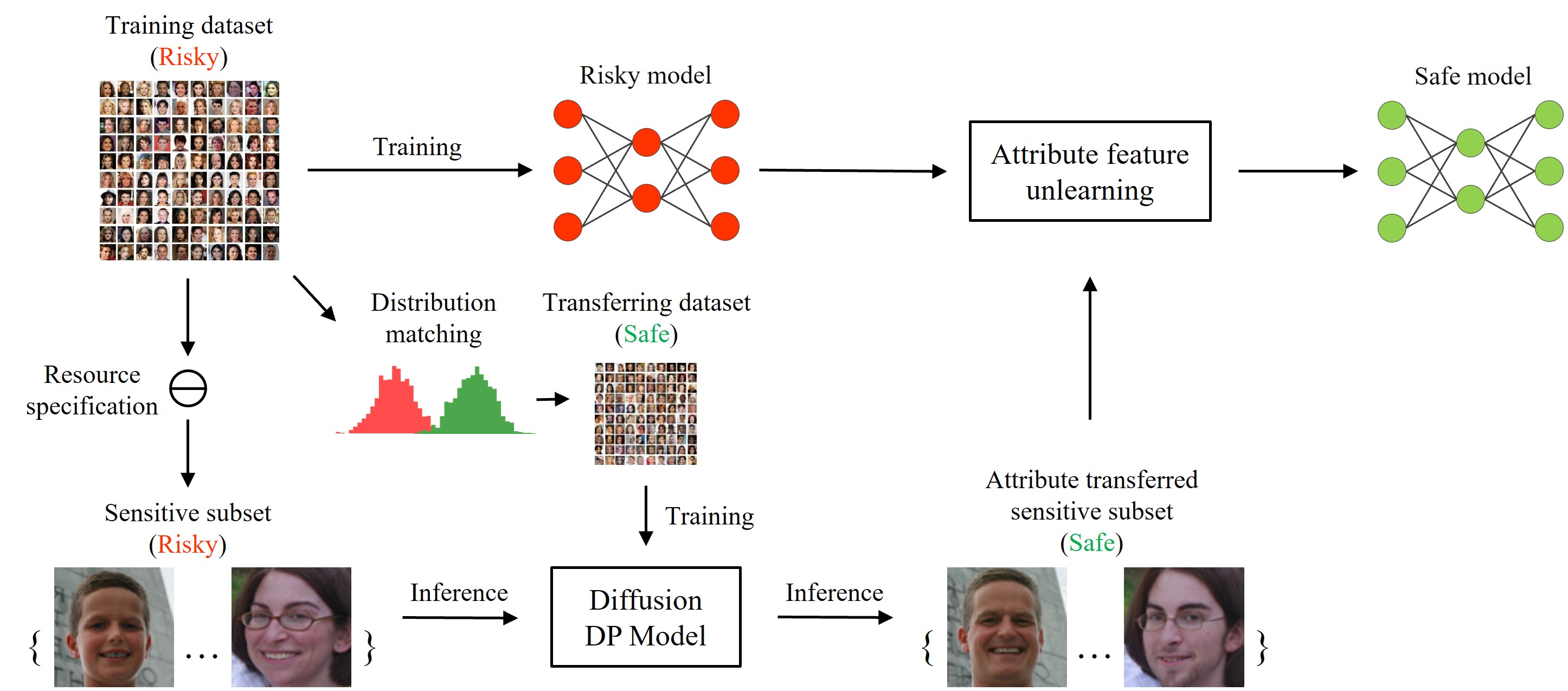}
\caption{A user-centric interactive
image privacy protection framework, which protects sensitive information in image data for machine learning. It transforms a risky training dataset and model into a safe model using attribute feature unlearning and DP techniques, ensuring privacy by modifying sensitive attributes and matching data distributions while balancing privacy protection with model performance through user feedback and adjustments.}
\label{fig2}
\end{figure*}



In this paper, we propose a user-centric interactive
image privacy protection framework that converts a potentially risky model into a safe one by unlearning specific attribute features, as depicted in Fig.~\ref{fig2}. 
The framework begins with a training dataset labeled as ``risky'' due to its inclusion of sensitive information. This dataset is used to train a risky model. 
The process involves identifying a sensitive subset within the training data, guided by a resource specification step, which isolates sensitive attributes or individuals. Concurrently, a transferring dataset, deemed ``safe'', undergoes distribution matching to ensure alignment with the sensitive subset. This allows for the safe transfer of attribute features, producing an ``attribute transferred sensitive subset'' that mitigates privacy risks. 
The core of this framework is the advanced machine unlearning (AMU) module, which employs a refined machine feature unlearning algorithm to efficiently remove sensitive attributes from the model without the need for
costly retraining.

To ensure privacy preservation, the framework integrates a Diffusion-DP module, providing DP guarantees by training on the transformed sensitive subset.
The result is a ``safe model'' that can perform accurate inferences while safeguarding sensitive information. 
This approach not only mitigates the risks associated with sensitive data exposure but also maintains high model performance and utility, making it an efficient solution for adapting to dataset changes while rigorously protecting user privacy.

In what follows, we will provide detailed insights into each module of the framework.

\subsection{Differential Privacy}\label{def}
For an $\left(\epsilon, \delta\right)$-DP mechanism, $\epsilon > 0$ denotes the distinguishable bound of all outputs on adjacent datasets $\mathcal{D}$ and $\mathcal{D}'$ in a dataset{\footnote{Two datasets, $\mathcal{D}$ and $\mathcal{D}'$, are adjacent if $\mathcal{D}'$ can be built by adding or removing a single training example from $\mathcal{D}$.}}, and $\delta$ denotes the probability that the ratio of the probabilities of two adjacent datasets $\mathcal{D}$ and $\mathcal{D}'$ cannot be bounded by $\exp(\epsilon)$ after adding a privacy-preserving mechanism~\cite{mcsherry2007mechanism}. The specific definition of DP is provided as follows.
\begin{definition}[$\left( \epsilon, \delta\right)$-DP~\cite{abadi2016deep}]
A randomized mechanism $\mathcal{ M}$: $\mathcal{ X}\to \mathcal{ R}$ with domain $\mathcal{ X}$ and range $\mathcal{ R}$ satisfies $\left( \epsilon, \delta\right)$-DP, if 
\begin{equation}
	\Pr \left[\mathcal{ M}(\mathcal{ D}) \in \mathcal{ S} \right] \leq e^{\epsilon} \Pr \left[\mathcal{ M}(\mathcal{D}')  \in \mathcal{ S}\right] + \delta,
\end{equation}
for all measurable sets $\mathcal{ S} \subseteq \mathcal{ R}$ and for any two adjacent datasets $\mathcal{D}, \mathcal{D}' \in \mathcal{ X}$.	
\end{definition}

\begin{definition}[$\left(\alpha, \epsilon\right)$-RDP~\cite{mironov2017renyi}]
A randomized mechanism $\mathcal{ M}$: $\mathcal{ X}\to \mathcal{ R}$ with domain $\mathcal{ X}$ and range $\mathcal{ R}$ satisfies $\left(\alpha, \epsilon\right)$-RDP, if 
\begin{equation}
D_{\alpha}(\mathcal{ M}(\mathcal{D})\| \mathcal{ M}(\mathcal{D}')) \leq \epsilon,
\end{equation}
for all measurable sets $\mathcal{ S} \subseteq \mathcal{ R}$ and for any two adjacent datasets $\mathcal{D}, \mathcal{D}' \in \mathcal{ X}$.	
\end{definition}

\begin{lemma}[RDP to $(\epsilon,\delta)-DP$~\cite{mironov2017renyi}]
    If a mechanism $\mathcal{ M}$: $\mathcal{ X}\to \mathcal{ R}$ satisfy $\left( \alpha, \delta\right)$-RDP, it also satisfies $\left( \epsilon + \frac{\log(1/\delta)}{\alpha -1}, \delta\right)$-DP for any $0 < \delta < 1$. Moreover, $\mathcal{ M}$ satisfies pure $\epsilon$-DP.
\end{lemma}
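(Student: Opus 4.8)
The plan is to prove the substantive conversion, namely that $(\alpha,\epsilon)$-RDP implies $\bigl(\epsilon + \tfrac{\log(1/\delta)}{\alpha-1},\,\delta\bigr)$-DP, by a likelihood-ratio partition argument. Fix an arbitrary pair of adjacent datasets and write $P := \mathcal{M}(\mathcal{D})$ and $Q := \mathcal{M}(\mathcal{D}')$. First I would unfold the hypothesis: for order $\alpha>1$ the Rényi divergence is $D_{\alpha}(P\|Q) = \frac{1}{\alpha-1}\log \mathbb{E}_{x\sim Q}\bigl[(P(x)/Q(x))^{\alpha}\bigr]$, so the assumption $D_{\alpha}(P\|Q)\leq\epsilon$ is equivalent to the moment bound $\mathbb{E}_{x\sim Q}\bigl[(P(x)/Q(x))^{\alpha}\bigr] \leq e^{(\alpha-1)\epsilon}$. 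Since the target inequality $\Pr[\mathcal{M}(\mathcal{D})\in\mathcal{S}] \leq e^{\epsilon'}\Pr[\mathcal{M}(\mathcal{D}')\in\mathcal{S}] + \delta$ with $\epsilon' := \epsilon + \frac{\log(1/\delta)}{\alpha-1}$ must hold for every measurable $\mathcal{S}\subseteq\mathcal{R}$, the whole argument reduces to bounding $\Pr[P\in\mathcal{S}]$ uniformly in $\mathcal{S}$.

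The next step is to split the output space by the privacy loss. I would set the threshold $c := e^{\epsilon'}$, define the bad region $B := \{x : P(x)/Q(x) > c\}$, and write $\Pr[P\in\mathcal{S}] = \mathbb{E}_{x\sim Q}\bigl[(P(x)/Q(x))\,\mathbf{1}\{x\in\mathcal{S}\}\bigr]$. On $\mathcal{S}\cap B^{c}$ the ratio is at most $c$, so restricting the expectation gives $\Pr[P\in\mathcal{S}\cap B^{c}] \leq c\,\Pr[Q\in\mathcal{S}\cap B^{c}] \leq e^{\epsilon'}\Pr[Q\in\mathcal{S}]$. Consequently $\Pr[P\in\mathcal{S}] = \Pr[P\in\mathcal{S}\cap B^{c}] + \Pr[P\in\mathcal{S}\cap B] \leq e^{\epsilon'}\Pr[Q\in\mathcal{S}] + \Pr[P\in B]$, which isolates the entire residual onto the mass that $P$ assigns to $B$.

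The crux is to show $\Pr[P\in B]\leq\delta$, and this is exactly where the RDP moment bound is spent. Writing $r(x) := P(x)/Q(x)$, on $B$ we have $r(x) > e^{\epsilon'}$, hence $r(x)^{\alpha-1} > e^{(\alpha-1)\epsilon'}$ and therefore the pointwise estimate $r(x) \leq r(x)^{\alpha}\,e^{-(\alpha-1)\epsilon'}$ holds on $B$. Expressing $\Pr[P\in B] = \mathbb{E}_{x\sim Q}\bigl[r(x)\,\mathbf{1}\{x\in B\}\bigr]$ and substituting this bound, then relaxing the truncated moment up to the full moment, yields $\Pr[P\in B] \leq e^{-(\alpha-1)\epsilon'}\,\mathbb{E}_{x\sim Q}[r(x)^{\alpha}] \leq e^{(\alpha-1)(\epsilon-\epsilon')}$. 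The closing step is arithmetic: the defining choice $\epsilon' = \epsilon + \frac{\log(1/\delta)}{\alpha-1}$ forces the exponent to equal $\log\delta$, so $\Pr[P\in B]\leq\delta$ exactly, and combining the two pieces establishes the claimed $(\epsilon',\delta)$-DP guarantee.

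I expect the main obstacle to be the bad-region moment estimate in the third step — specifically, justifying that the pointwise inequality $r(x)\leq r(x)^{\alpha}e^{-(\alpha-1)\epsilon'}$ is valid only on $B$ (it fails where $r<e^{\epsilon'}$), and then verifying that enlarging $\mathbb{E}_{x\sim Q}[r(x)^{\alpha}\mathbf{1}\{x\in B\}]$ to the full moment $\mathbb{E}_{x\sim Q}[r(x)^{\alpha}]$ still lands on $\delta$ rather than overshooting it. The concluding ``moreover'' clause about pure $\epsilon$-DP I would treat as the degenerate limit $\alpha\to\infty$, in which $D_{\infty}(P\|Q)\leq\epsilon$ reduces to $\sup_{x} r(x) \leq e^{\epsilon}$ and the correction term $\frac{\log(1/\delta)}{\alpha-1}$ vanishes, recovering the standard $\epsilon$-DP inequality directly.
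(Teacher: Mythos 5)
Your proposal is correct, but note first that the paper itself contains no proof of this lemma: it is stated as a known result imported from Mironov (2017), with typos even in the statement (the hypothesis should read $(\alpha,\epsilon)$-RDP rather than $(\alpha,\delta)$-RDP, and the trailing clause about ``pure $\epsilon$-DP'' is Mironov's separate observation that $(\infty,\epsilon)$-RDP coincides with $\epsilon$-DP, which you sensibly reinterpreted as the $\alpha\to\infty$ limit). So the honest comparison is against the proof in the cited source. There, Mironov argues via a probability-preservation inequality obtained from H\"older's inequality, $P(\mathcal{S}) \le \bigl(e^{\epsilon} Q(\mathcal{S})\bigr)^{(\alpha-1)/\alpha}$, followed by a case split: if this quantity is at most $\delta$, then $P(\mathcal{S})\le\delta$; otherwise one rewrites it as $e^{\epsilon}Q(\mathcal{S})\cdot\bigl(e^{\epsilon}Q(\mathcal{S})\bigr)^{-1/\alpha} \le e^{\epsilon + \log(1/\delta)/(\alpha-1)}\,Q(\mathcal{S})$, giving $P(\mathcal{S})\le \max\bigl(\delta,\, e^{\epsilon'}Q(\mathcal{S})\bigr)$. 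Your route is genuinely different: you truncate the likelihood ratio $r = P/Q$ at the threshold $e^{\epsilon'}$, bound the good region by $e^{\epsilon'}Q(\mathcal{S})$, and control the bad-region mass $P(B)$ by the Markov-type pointwise estimate $r \le r^{\alpha}e^{-(\alpha-1)\epsilon'}$ (valid on $B$ exactly because $r>e^{\epsilon'}$ and $\alpha>1$), then spend the RDP moment bound $\mathbb{E}_Q[r^{\alpha}]\le e^{(\alpha-1)\epsilon}$ to land on $P(B)\le e^{(\alpha-1)(\epsilon-\epsilon')}=\delta$ exactly. All steps check out, including the two you flagged: the pointwise inequality is only needed on $B$, and relaxing the truncated moment to the full moment is harmless because the integrand is nonnegative and the choice of $\epsilon'$ absorbs the slack precisely. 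As for what each approach buys: Mironov's H\"older argument is shorter and yields the slightly sharper $\max$ form of the conclusion, while your truncation argument is more elementary (no H\"older), makes the privacy-loss random variable explicit, and is the template on which later, tighter RDP-to-DP conversions (via privacy-loss distributions) are built. The one hygiene point worth adding in a written version is absolute continuity: if $P\not\ll Q$ the ratio $r$ is undefined, but then $D_{\alpha}(P\|Q)=\infty$ and the hypothesis is vacuous, so the argument is unaffected.
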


We analyze the sensitivity and privacy performance of the proposed time-varying DP noise perturbation mechanism. We use the $\ell_2$-norm sensitivity, as given by
\begin{equation}\label{eq_sensitivity}
\Delta s = \max_{\mathcal{D},\mathcal{D}'} \left\| s(\mathcal{D}) - s(\mathcal{D}')\right\|,
\end{equation}
where $s(\cdot)$ is a general function in $\mathcal{D}$.	

\subsection{Diffusion Differential Privacy Model}
To implement DP for images, we consider a diffusion mechanism that generates images while incorporating noise to ensure privacy.
\begin{definition}[Weighted Diffusion Mechanism]\label{def-weighted_diffusion}
    Assume $\mathbf{X},\mathbf{X}' \in {\mathbb R}^d$, let $\mathcal{ M}(\mathbf{X})$ be the mechanism that is a random function taking $X$ as the input, and returning $\mathcal{ M}(\mathbf{X})$ at the $(t+1)$-the iteration:
    \begin{equation}
        X_{t+1}(i) = s_{t+1}(X_t(i)) + {w}^t_c(i) \cdot n_{t+1}(i),
    \end{equation}
    \begin{equation}
        X_{t+1} = Decoder(Encoder(X_t) + {w}^t_c(i) \cdot n_{t+1}(i)),
    \end{equation}
    where $s_{t+1}(\cdot)$ is the contractive map, ${\mathbf{w}}^t_c =[w^t_c(1),\cdots,w^t_c(i),\cdots,w^t_c(d)]  $ is the weighted vector obtained from a $N$-class classifier, and $\mathbf{n}_{t+1} \sim OU (\theta, \rho)$  at the $(t+1)$-th iteration, i.e., $\mathbf{n}_{t} = \mathcal{ N} (e^{-\theta t}x, \frac{\rho^2}{\theta}(1-e^{-2\theta t})\mathbb{I}_d)$. Let $\mathcal{ M}(\mathbf{X}')$ be obtained from $\mathbf{X}'$ under the same mapping. For $\alpha \geq 1$, $\mathcal{M}(\mathbf{X})$ satisfies
    \begin{equation}
                 D_{\alpha}(\mathcal{ M}(\mathbf{X})\| \mathcal{ M}(\mathbf{X}')) \leq \frac{\alpha\|\mathbf{X} - \mathbf{X}' \|^2}{2T \left(\frac{{w}^t_c(i)}{\sum_{i=1}^N {w}^t_c(i)} \cdot \frac{\rho^2}{\theta}(e^{2\theta t}-1)\right)}.
    \end{equation}
\end{definition}

\begin{theorem}[Diffusive DP-Image]
Suppose that an input image $\mathbf{X}_0$, and its corresponding output image $\mathbf{X}_T$. $s(\cdot)$ maps the image into its latent space (or, feature space), and $s'(\cdot)$ is a 1-Lipschitz. 
Let $s(\cdot)$ have global $L_2$-sensitivity $\Delta_s$ and $\mathbf{P} = (P_t)_{t \geq 0}$ be the Ornstein-Uhlenbeck (OU) process with parameters $\theta$ and $\rho$. For any $\alpha >1$ and $t >0 $ the OU mechanism ${\mathcal{M}}_t^s(\mathbf{X}_t) = P_t (s(\mathbf{X}_t))$ satisfies $\left(\alpha, \frac{\alpha \theta \Delta_s^2}{2T \left(\frac{{w}^t_c(i)}{\sum_{i=1}^N {w}^t_c(i)} \rho^2 (e^{2 \theta t -1 })\right)}\right)$-RDP.
\end{theorem}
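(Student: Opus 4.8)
The plan is to reduce the statement to a Rényi divergence computation between two Gaussians and then track how the OU kernel, the $L_2$-sensitivity, and the per-coordinate weights enter the bound. First I would write the OU transition kernel explicitly: since $\mathbf{n}_t \sim \mathcal{N}(e^{-\theta t}x, \frac{\rho^2}{\theta}(1-e^{-2\theta t})\mathbb{I}_d)$, the mechanism $\mathcal{M}_t^s(\mathbf{X}_t) = P_t(s(\mathbf{X}_t))$ is Gaussian with mean $e^{-\theta t} s(\mathbf{X})$ and covariance $\sigma_t^2 \mathbb{I}_d$, where $\sigma_t^2 = \frac{\rho^2}{\theta}(1-e^{-2\theta t})$. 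For the adjacent input $\mathbf{X}'$ the output is another Gaussian with the \emph{same} covariance and mean $e^{-\theta t} s(\mathbf{X}')$, so the two laws differ only in their means and the divergence becomes a pure mean-shift calculation.

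Next I would invoke the closed form for the Rényi divergence of two Gaussians sharing a covariance, $D_\alpha(\mathcal{N}(\mu_1,\Sigma)\|\mathcal{N}(\mu_2,\Sigma)) = \frac{\alpha}{2}(\mu_1-\mu_2)^\top \Sigma^{-1}(\mu_1-\mu_2)$. Substituting $\mu_1-\mu_2 = e^{-\theta t}(s(\mathbf{X})-s(\mathbf{X}'))$ and $\Sigma=\sigma_t^2\mathbb{I}_d$ yields $D_\alpha = \frac{\alpha e^{-2\theta t}\|s(\mathbf{X})-s(\mathbf{X}')\|^2}{2\sigma_t^2}$. I would then bound the numerator by the global $L_2$-sensitivity, $\|s(\mathbf{X})-s(\mathbf{X}')\|\leq \Delta_s$ (here the 1-Lipschitz, i.e.\ non-expansive, property of the contractive map $s'$ guarantees that routing the feature map through the diffusion step does not inflate this gap), and simplify the exponential factor through the identity $\frac{e^{-2\theta t}}{1-e^{-2\theta t}} = \frac{1}{e^{2\theta t}-1}$. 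This already delivers the unweighted skeleton $\frac{\alpha\theta\Delta_s^2}{2\rho^2(e^{2\theta t}-1)}$.

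The remaining work is to fold in the coordinate weights and the horizon $T$ appearing in the Weighted Diffusion Mechanism of Definition~\ref{def-weighted_diffusion}. Because the $i$-th coordinate of the injected noise is rescaled by $w^t_c(i)$, the output covariance is diagonal rather than scalar, so the quadratic form in the Gaussian divergence is modulated coordinatewise; after the normalization built into the mechanism this surfaces as the factor $\frac{w^t_c(i)}{\sum_{i=1}^N w^t_c(i)}$ in the denominator, exactly as in Definition~\ref{def-weighted_diffusion}. The factor $T$ then enters by distributing the total divergence budget uniformly across the $T$ diffusion iterations. Assembling these pieces reproduces the claimed parameter $\frac{\alpha\theta\Delta_s^2}{2T(\frac{w^t_c(i)}{\sum_{i=1}^N w^t_c(i)}\rho^2(e^{2\theta t}-1))}$ and hence establishes $(\alpha,\epsilon)$-RDP.

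I expect the main obstacle to be the bookkeeping around the weighted, per-coordinate noise: the scalar-covariance formula no longer applies verbatim, and I would need to argue carefully why the \emph{normalized} weight $\frac{w^t_c(i)}{\sum_{i=1}^N w^t_c(i)}$ (rather than a squared raw weight) is the quantity that lands in the bound, and how the single retained index $i$ is selected or aggregated across coordinates. A secondary subtlety is justifying the appearance of $T$ as a uniform split of the divergence across iterations, which implicitly relies on the contraction property of $s'$ keeping the iterated OU mechanism stable in the spirit of privacy amplification by iteration.
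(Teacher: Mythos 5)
Your core computation is correct, but your route is genuinely different from—and more self-contained than—the paper's. The paper's proof is essentially three lines: it substitutes $s(\mathbf{X})$ into the divergence inequality that is \emph{asserted} inside the Weighted Diffusion Mechanism (Definition~3), bounds $\|s(\mathbf{X})-s(\mathbf{X}')\|^2$ by the global sensitivity $\Delta_s^2$ via the $\ell_2$-sensitivity definition, and concludes from the definition of $(\alpha,\epsilon)$-RDP; the normalized weight $\frac{w^t_c(i)}{\sum_{i=1}^N w^t_c(i)}$ and the factor $T$ are never derived anywhere in the paper—they appear in the theorem only because they are baked into that ``definition'' and cited. You instead derive the single-step bound from first principles: the OU kernel is Gaussian with mean $e^{-\theta t}s(\mathbf{X})$ and covariance $\sigma_t^2\mathbb{I}_d$ with $\sigma_t^2=\frac{\rho^2}{\theta}(1-e^{-2\theta t})$, the equal-covariance R\'enyi divergence formula gives $\frac{\alpha e^{-2\theta t}\|s(\mathbf{X})-s(\mathbf{X}')\|^2}{2\sigma_t^2}$, and the identity $\frac{e^{-2\theta t}}{1-e^{-2\theta t}}=\frac{1}{e^{2\theta t}-1}$ plus the sensitivity bound yields $\frac{\alpha\theta\Delta_s^2}{2\rho^2(e^{2\theta t}-1)}$; this is real mathematical content the paper omits, and incidentally shows that $(e^{2\theta t}-1)$ is the correct reading of the theorem's typo $(e^{2\theta t-1})$. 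The two ``obstacles'' you flag at the end are not weaknesses of your write-up relative to the paper: they are exactly the steps the paper never justifies either. Per-coordinate noise scaled by $w^t_c(i)$ would place \emph{squared} weights $(w^t_c(i))^2$ in a diagonal-covariance quadratic form summed over coordinates, not a first-power normalized weight with a dangling index $i$; and composing $T$ iterations ordinarily \emph{adds} divergences, so a $1/T$ improvement would require a privacy-amplification-by-iteration argument exploiting the contractivity of $s'$, which neither you nor the paper supplies. In short: if you allow yourself to cite Definition~3's inequality as the paper does, your derivation is complete and strictly more rigorous than the published proof; if you insist on proving that inequality, the gap you identified is a gap in the paper itself.
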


\begin{proof}
According to {\bf{Definition~\ref{def-weighted_diffusion}}}, we have $\mathcal{ M}(s(\mathbf{X}))$ for $\alpha \geq 1$ satisfies
\begin{subequations}\label{eq-divergence}
\begin{align}
   D_{\alpha}\left(\mathcal{ M}(s(\mathbf{X}))\| \mathcal{ M}(s(\mathbf{X}'))\right) 
   & \leq \frac{\alpha\|s(\mathbf{X}) - s(\mathbf{X}') \|^2}{2T\left(\frac{{w}^t_c(i)}{\sum_{i=1}^N {w}^t_c(i)} \cdot \frac{\rho^2}{\theta}(e^{2\theta t}-1)\right)} \label{eq-divergence a}\\
   & \leq \frac{\alpha \max_{\{\bf{X},\bf{X}'\}} \|s(\mathbf{X}) - s(\mathbf{X}') \|_2^2}{2T \left(\frac{{w}^t_c(i)}{\sum_{i=1}^N {w}^t_c(i)} \cdot \frac{\rho^2}{\theta}(e^{2\theta t}-1)\right)} \label{eq-divergence b}\\
   & = \frac{\alpha \Delta^2_s}{2T \left(\frac{{w}^t_c(i)}{\sum_{i=1}^N {w}^t_c(i)} \cdot \frac{\rho^2}{\theta}(e^{2\theta t}-1)\right)}, \label{eq-divergence c}
\end{align}
\end{subequations}
where \eqref{eq-divergence c} is obtained based on \eqref{eq_sensitivity}.

Based on the definition of $(\alpha,\epsilon)$-RDP, we have $ D_{\alpha}(\mathcal{ M}(\mathbf{X})\| \mathcal{ M}(\mathbf{X}')) \leq \epsilon$. Therefore, the OU mechanism ${\mathcal{M}}_t^s(\mathbf{X}_t) = P_t (s(\mathbf{X}_t))$ satisfies $\left(\alpha, \frac{\alpha \theta \Delta_s^2}{2T \left(\frac{{w}^t_c(i)}{\sum_{i=1}^N {w}^t_c(i)} \rho^2 (e^{2 \theta t -1 })\right)}\right)$-RDP.

\end{proof}

\begin{algorithm}[!t]
\caption{Diffusion-DP algorithm}\label{alg:cap}
\LinesNumbered
	\textbf{Parameters:}{~Noise coefficient $\sigma$, and number of iterations $T$.}\\
	\KwIn{The original image $X_t$.}
	\KwOut{Output Privacy-preserving image $X_T$.}
	\While {$0 \leq t < T$}{
	$\mathbf{z}_t = s(\mathbf{X}_t)$ \\
	$\mathbf{z}_{t+1} = s'(\mathbf{z}_t) + \mathbf{w}^t_c\cdot \mathbf{n}_t, \; \mathbf{n}_t \sim {\text{OU}}(\theta,\rho)$\\
	$\mathbf{X}_{t+1} = g(\mathbf{z}_{t+1})$}
\end{algorithm}

\subsection{Advanced Machine Unlearning}

This section presents our methodology for unlearning specific attributes from input images using first-order and second-order optimization techniques, aiming to remove the influence of specific attributes from the model while preserving the overall performance of other features.

\subsubsection{First-Order Update}
To unlearn data, we aim to find an update \(\Delta(\mathbf{X}, \tilde{\mathbf{X}})\) for our model \(w^*\),
where $\mathbf{X}$ and $\tilde{\mathbf{X}}$ are the original data and its perturbed version. If the loss \({\mathcal{L}}\) is differentiable, we compute the \textit{first-order update} as:
\begin{equation}
\Delta(\mathbf{X}, \tilde{\mathbf{X}}) = -\eta \left( \sum_{\tilde{x} \in \tilde{\mathbf{X}}} \nabla_w {\mathcal{L}}(\tilde{x}, w^*) - \sum_{x \in \mathbf{X}} \nabla_w {\mathcal{L}}(x, w^*) \right),
\end{equation}
where \(\eta\) is the unlearning rate. This update shifts the model parameters to minimize the loss on \(\tilde{x}\) while removing the information in \(x\). It differs from gradient descent by moving the model based on the gradient \textit{difference} between original and perturbed data. The gradients can be computed in \(\mathcal{O}(p)\), with $p$ being the number of parameters in the learning model~\cite{WarneckePWR23}.

\subsubsection{Second-Order Update}
The unlearning rate \(\eta\) can be eliminated if \({\mathcal{L}}\) is twice differentiable and strictly convex. The influence of a single data point is approximated by:
\begin{equation}
\left. \frac{\partial w^*_{\phi, x \rightarrow \tilde{x}}}{\partial \phi} \right|_{\phi=0} = -H_{w^*}^{-1} \left( \nabla_w {\mathcal{L}}(\tilde{x}, w^*) - \nabla_w {\mathcal{L}}(x, w^*) \right),
\end{equation}
where \(H_{w^*}^{-1}\) is the inverse Hessian of the loss at \(w^*\). This leads to a linear approximation, as given by
\begin{equation}
w^*_{x \rightarrow \tilde{x}} \approx w^* - H_{w^*}^{-1} \left( \nabla_w {\mathcal{L}}(\tilde{x}, w^*) - \nabla_w {\mathcal{L}}(x, w^*) \right).
\end{equation}
Extending this to multiple data points gives the \textit{second-order update}:
\begin{equation}
\Delta(\mathbf{X}, \tilde{\mathbf{X}}) = -H_{w^*}^{-1} \left( \sum_{\tilde{x} \in \tilde{X}} \nabla_w {\mathcal{L}}(\tilde{x}, w^*) - \sum_{x \in X} \nabla_w {\mathcal{L}}(x, w^*) \right).
\end{equation}

This update does not require parameter calibration, as it derives from the inverse Hessian of the loss function. The second-order update is preferred for unlearning in models with strongly convex and twice differentiable loss functions, and can be easily calculated with common machine learning frameworks.

\section{Experiment}\label{exp}

We verify the effectiveness of our proposed method on the two manually corrupted datasets: CelebA-HQ dataset \cite{CelebAMask-HQ} and CelebA dataset \cite{liu2015faceattributes}.
CelebA-HQ contains 30k high-resolution celebrity images with 40 attributes. We use this CelebA-HQ dataset to train our diffusion model. 
CelebA has 202,599 images with 40 same attributes as CelebA-HQ. We use this dataset to test our proposed method. 

\subsection{Comparison Methods}
We compare our method against several SOTA methods. The image-manipulated methods involve processing the input images (224x224 pixels) using different techniques: (i) Random central removal of 100x100 pixels, which involves randomly removing a central portion of the image; (ii) Whole image mosaic, which applies a mosaic filter to the entire image, blurring detailed features; and (iii) our proposed diffusion-based method, which leverages diffusion processes to selectively alter specific attributes in the images. For fine-tuning methods, we implement three unlearning approaches: (i) Retraining, which involves training the model from scratch on a modified dataset; (ii) First-order unlearning, which adjusts the model parameters using gradient descent based on the attribute-specific loss; and (iii) Second-order unlearning, our proposed method, which employs both gradient and curvature information to more effectively remove specific attributes from the model. 

\subsection{Evaluation Metrics}

In our experiments, conducted on the CelebA dataset, we employed several evaluation metrics to assess the performance of attribute modification and image realism. Specifically, we focused on two primary metrics: Attribute Classification Accuracy (ACC) and Structural Similarity Index (SSIM).

For the assessment of attribute modification, we utilized the ACC. In this approach, we trained a linear binary classifier on the training set for each of the 40 attributes. This allowed us to evaluate the accuracy with which each attribute was correctly modified or maintained. The effectiveness of the attribute modifications across the entire image was further quantified by calculating the average ACC across all 40 attributes. This average ACC provides a comprehensive measure of the system's performance in handling multiple attributes simultaneously, ensuring that modifications are not only accurate but also consistent across the dataset.

In addition to ACC, we also assessed the realism of the generated images using the SSIM. Following the methodologies presented in \cite{hinojosa2021learning, liu2021dp}, SSIM was chosen due to its ability to measure the perceptual quality of images. SSIM evaluates the similarity between two images based on luminance, contrast, and structural information, producing a similarity score that ranges from -1 to 1. Higher SSIM values indicate a greater degree of similarity between the images, reflecting the extent to which the modified images retain their visual fidelity and align with human visual perception. Conversely, lower SSIM values suggest a decline in image quality or realism.

Together, these metrics provide a robust framework for evaluating both the technical accuracy of attribute modifications and the visual quality of the resulting images. The combination of ACC and SSIM ensures that our approach is not only effective in modifying attributes but also capable of generating realistic and visually pleasing images, which is critical for applications that require high levels of visual authenticity.

\subsection{Experimental Results}

\begin{table*}[!t]
\small
\centering
\caption{Comparison of different image manipulation methods and their impact on attribute classification performance.}
\label{tab-noise-combined}
\begin{tabular}{c|c|c|c|c|c|c|c}
\hline
\textbf{Change} & \textbf{Methods} & \textbf{Average} & \textbf{Young} & \textbf{Attract} & \textbf{Bald} & \textbf{Bangs} & \textbf{Blurry} \\
\hline
\multirow{1}{*}{} & \textbf{Clean} & 89.75 & 85.81 & 79.58 & 98.59 & 94.78 & 95.73 \\
\hline
\multirow{3}{*}{\textbf{1\%}} & \textbf{Mosaic} & 89.06(-0.69) & 82.38(-3.43) & 74.1(-5.48) & 98.53(-0.06) & 94.6(-0.18) & 95.79(+0.06) \\
& \textbf{Random} & 89.31(-0.44) & \textbf{84.32(-1.49)} & 76.04(-3.54) & 98.52(-0.07) & 94.55(-0.23) & \textbf{95.9(+0.17)} \\
& \textbf{Ours} & \textbf{89.82(+0.07)} & 84.13(-1.68) & \textbf{77.8(-1.78)} & \textbf{98.59(0.00)} & \textbf{94.72(-0.06)} & 95.78(+0.05) \\
\hline
\multirow{3}{*}{\textbf{3\%}} & \textbf{Mosaic} & 88.96(-0.79) & 82.94(-2.87) & 74.84(-4.74) & 98.57(-0.02) & \textbf{94.56} & \textbf{95.67(-0.06)} \\
& \textbf{Random} & \textbf{89.28(-0.47)} & \textbf{84.16(-1.65)} & 75.56(-4.02) & 98.58(-0.01) & \textbf{94.56(-0.22)} & 95.6(-0.13) \\
& \textbf{Ours} & 89.07(-0.68) & 83.53(-2.28) & \textbf{77.6(-1.98)} & \textbf{98.59(0.00)} & \textbf{94.56(-0.22)} & 95.66(-0.07) \\
\hline
\multirow{3}{*}{\textbf{5\%}} & \textbf{Mosaic} & 88.96(-0.79) & 79.94(-5.87) & 78.84(-0.74) & \textbf{98.57(-0.02)} & \textbf{94.56(-0.22)} & \textbf{95.67(-0.06)} \\
& \textbf{Random} & 88.91(-0.84) & 82.87(-2.94) & 71.07(-8.51) & \textbf{98.57(-0.02)} & 81.97(-12.81) & 95.42(-0.31) \\
& \textbf{Ours} & \textbf{89.05(-0.7)} & \textbf{82.95(-2.86)} & \textbf{79.63(+0.05)} & 98.4(-0.19) & 94.09(-0.69) & 95.42(-0.31) \\
\hline
\end{tabular}
\end{table*}

Table~\ref{tab-noise-combined} provides an extensive comparison of various image manipulation methods and their impact on attribute classification performance under different levels of data change (i.e., 1\%, 3\%, and 5\%). The methods compared include `Mosaic', `Random', and `Ours', with each method's performance evaluated across multiple attributes such as \texttt{Young}, \texttt{Arched}, \texttt{Bald}, \texttt{Bangs}, and \texttt{Blurry}.

Table \ref{tab-noise-combined} presents the results for attributes categorized under different percentages of data changes. 
Notably, in the case of 1\% data change, 
our method achieves the highest average accuracy of 89.82\% across 40 attributes, outperforming both the Mosaic' (82.38\%) and `Random' (84.32\%) methods. 
Our method also shows strong performance in specific attributes, such as \texttt{Attract} and \texttt{Bald}, indicating its effectiveness in handling minimal data changes.
As the data change increases to 3\% and 5\%, the `Ours' method consistently demonstrates superior performance, maintaining high accuracy in most attributes. 
For example, under 3\% data change, our method reaches an accuracy of 89.07\%, compared to 88.96\% and 89.28\% for `Mosaic' and `Random', respectively.
This trend continues under the 5\% data change scenario, where our method achieves an accuracy of 89.05\%, again outperforming Mosaic (88.96\%) and Random (88.91\%).


Overall, the results indicate that our method is particularly robust and adaptable, offering high accuracy even as the data change percentage increases. This consistent performance across different attributes and levels of data manipulation highlights the method's potential for robust attribute classification in dynamic datasets. Such reliability is crucial for real-world applications where data may vary significantly, and accurate attribute classification is essential. 
Our proposed method's superior results in most categories, especially under higher data changes, highlight its effectiveness and utility in improving attribute classification performance.

\begin{table*}[t]
\small
\centering
\caption{Comparison of different unlearning methods and their impact on attribute classification performance.}
\label{tab-unlearn-combined}
\begin{tabular}{c|c|c|c|c|c|c|c}
\hline
\textbf{Change} & \textbf{Methods} & \textbf{Average} & \textbf{Young} & \textbf{Attract} & \textbf{Bald} & \textbf{Bangs} & \textbf{Blurry} \\
\hline
\multirow{3}{*}{\textbf{Clean}} & \textbf{Retraining} & 89.75 & 85.81 & 79.58 & 98.59 & 94.78 & 95.73 \\
& \textbf{First\_order} & \textbf{89.03(-0.72)} & 78.07(-7.74) & 67.67(-11.91) & 98.31(-0.28) & 91.13(-3.65) & 95.03(-0.70) \\
& \textbf{Second\_order} & 89.01(-0.74) & \textbf{79.65(-6.16)} & \textbf{72.65(-6.93)} & \textbf{98.49(-0.10)} & \textbf{91.44(-3.34)} & \textbf{95.07(-0.66)} \\
\hline
\multirow{3}{*}{\textbf{3\%}} & \textbf{Retraining} & 89.07(-0.68) & 83.35(-2.46) & 76.98(-2.60) & 98.46(-0.13) & 94.47(-0.31) & 95.60(-0.13) \\
& \textbf{First\_order} & 89.00(-0.75) & 82.80(-3.01) & 60.62(-18.96) & 98.07(-0.52) & 92.53(-2.25) & 95.14(-0.59) \\
& \textbf{Second\_order} & \textbf{89.01(-0.74)} & \textbf{83.26(-2.55)} & \textbf{68.01(-11.57)} & \textbf{98.29(-0.30)} & \textbf{94.00(-0.78)} & \textbf{95.16(-0.57)} \\
\hline
\multirow{3}{*}{\textbf{5\%}} & \textbf{Retraining} & 89.01(-0.74) & 83.91(-1.90) & 76.96(-2.62) & 98.37(-0.22) & 93.94(-0.84) & 95.62(-0.11) \\
& \textbf{First\_order} & 88.66(-1.09) & 81.18(-4.63) & 67.93(-11.65) & 98.01(-0.58) & 92.01(-2.77) & 95.1(-0.63) \\
& \textbf{Second\_order} & \textbf{89.02(-0.73)} & \textbf{81.27(-4.54)} & \textbf{68.18(-11.40)} & \textbf{98.23(-0.36)} & \textbf{93.00(-1.78)} & \textbf{95.15(-0.58)} \\
\hline
\multirow{3}{*}{\textbf{10\%}} & \textbf{Retraining} & 89.05(-0.70) & 82.95(-2.86) & 73.98(-5.60) & 98.40(-0.19) & 94.09(-0.69) & 95.42(-0.31) \\
& \textbf{First\_order} & 88.64(-1.11) & 79.10(-6.71) & 61.10(-18.48) & 98.03(-0.56) & \textbf{91.19(-3.59)} & 95.05(-0.68) \\
& \textbf{Second\_order} & \textbf{88.99(-0.76)} & \textbf{79.54(-6.27)} & \textbf{61.49(-18.09)} & \textbf{98.04(-0.55)} & 91.15(-3.63) & \textbf{95.13(-0.60)} \\
\hline
\end{tabular}
\end{table*}

Table \ref{tab-unlearn-combined} provides a detailed comparison of various unlearning methods and their impact on attribute classification performance under different levels of data change (3\%, 5\%, and 10\%). The methods evaluated include `Retraining', `First\_order', and `Second\_order', with  their performance assessed across multiple attributes such as \texttt{Young}, \texttt{Arched}, \texttt{Bald}, \texttt{Bangs}, and \texttt{Blurry}. The differences from the `clean' baseline are highlighted, with all selected attributes having an initial accuracy of less than 90\%.

In the clean data scenario, the `Retraining' method method achieves the highest average accuracy of 89.75\% across 40 attributes, followed closely by `First\_order' (89.03\%) and `Second\_order' (89.01\%). As the data change increases to 3\%, 5\%, and 10\%, the `First\_order' and `Second\_order' methods exhibit competitive performance, often surpassing `Retraining' in specific attributes. 
For instance, under a 3\% data change, `Second\_order' attains notable accuracies in attributes like \texttt{Bald} (98.29\%) and \texttt{Bangs} (94.00\%), with values close to those of the clean scenario, demonstrating robustness against minor data alterations.
As the data change reaches 5\% and 10\%, the `First\_order' and `Second\_order' methods continue to demonstrate robust performance. 
Specifically, at a 10\% data change, `Second\_order' maintains an average accuracy of 88.99\%, excelling in challenging attributes like \texttt{Bald} (98.04\%) and \texttt{Bangs} (91.15\%). 
The `First\_order' method, although slightly lower in accuracy, still achieves a respectable average accuracy of 88.64\%, showing reasonable adaptability in high data change scenarios.
The method's ability to maintain high accuracy under significant data changes underscores its robustness and effectiveness.

Overall, these results demonstrate the efficacy of the `First\_order' and `Second\_order' unlearning methods in maintaining high attribute classification performance across varying levels of data change. The `Second\_order' method, in particular, shows superior adaptability and accuracy in more challenging scenarios, suggesting its potential as a reliable unlearning technique for robust attribute classification in dynamic and diverse datasets. These insights are valuable for selecting appropriate unlearning methods to enhance attribute classification performance in practical applications.
\begin{table}[t]
\caption{Image generation quality on CelebA-HQ~\cite{CelebAMask-HQ}. }
\label{tab:my-table}
\small
\begin{tabular}{l|c}
\hline
\textbf{Model}                          & $\textbf{SSIM}\uparrow$ \\ \hline
StyleGAN2 ($\mathcal{W}$) \cite{karras2020analyzing}  & 0.677                                     \\
StyleGAN2 ($\mathcal{W}$+) \cite{karras2020analyzing}  & 0.827                                     \\
VQ-GAN \cite{esser2021imagebart}                       & 0.782                                     \\
DDIM (T=100, $128^2$) \cite{songdenoising}        & 0.917                                     \\
Diff-AE (T=100, $128^2$) \cite{preechakul2022diffusion}                                        & \textbf{0.991}                            \\
\textbf{Ours} (T=100, $128^2$)                            & { 0.951}                               \\ \hline
\end{tabular}
\end{table}

Table~\ref{tab:my-table} compares the performance of various image generation models on the CelebA-HQ dataset, measured by the SSIM metric.
Diff-AE leads with a near-perfect SSIM of 0.991, indicating exceptional retention of image details. Our method, with an SSIM of 0.951, ranks second, showcasing its strong capability to preserve image realism, highlighting its competitiveness and effectiveness in generating high-quality images after attribute modifications.


\begin{table}[t]
\small
\caption{Training times for different methods.}
\label{tab-time}
\centering
\begin{tabular}{c|c}
\hline
\textbf{Methods}     & \textbf{Training Time} \\
\hline
Clean               & 13322s             \\
Retraining          & 15185s             \\
First\_order       & 174.41s            \\
Second\_order       & 506.99s            \\
\hline
\end{tabular}
\end{table}
Table~\ref{tab-time} summarizes the training times for different unlearning methods. 
The `Clean' method takes 13,322 seconds, while the `Retraining' method, which likely involves comprehensive retraining, required the most time at 15,185 seconds. In contrast, the `First\_order' and `Second\_order' methods drastically reduce the training times to 174.41 seconds and 506.99 seconds, respectively. 
Notably, the `Second\_order' method, while not as fast as `First\_order', still offers a substantial reduction in time compared to `Clean' and `Retraining'. This additional computational time for the `Second\_order' method suggests it may involve a more detailed unlearning process, potentially leading to better accuracy, as indicated in Table~\ref{tab-unlearn-combined}. 
This efficiency, coupled with the possibility of improved accuracy, makes the `Second\_order' method particularly valuable for applications where balancing computational resources and maintaining high performance is critical. The sharp decrease in training times with both `First\_order' and `Second\_order' methods demonstrates their potential utility in scenarios necessitating quick updates or modifications, while the slight increase in computational time for the `Second\_order' method could indicate a trade-off that favors accuracy without significantly compromising efficiency.

\section{Conclusion}\label{con}
This paper introduces a privacy protection framework for image data, designed to address challenges in both data sharing and model publication. By integrating a diffusion model for attribute modification with machine unlearning algorithms to secure model parameters, our approach successfully balances the dual objectives of privacy protection and model performance. The framework’s user-interactive design enables customized privacy settings, ensuring that high-quality image generation and precise attribute modifications are maintained. Our results demonstrate that this framework not only preserves image realism but also enhances accuracy across facial datasets, representing a significant advancement in privacy-preserving machine learning.





\small
\bibliography{aaai}

\end{document}